\documentclass[letterpaper, 10 pt, conference]{ieeeconf}
\IEEEoverridecommandlockouts

\usepackage{cite}
\usepackage{amsmath,amssymb,amsfonts}
\usepackage{algorithmic}
\usepackage{graphicx}
\usepackage{subcaption} 
\usepackage{microtype}
\usepackage{textcomp}
\usepackage{xcolor}
\usepackage{float}
\usepackage{multirow}
\usepackage{url}
\def\BibTeX{{\rm B\kern-.05em{\sc i\kern-.025em b}\kern-.08em
    T\kern-.1667em\lower.7ex\hbox{E}\kern-.125emX}}

\DeclareMathAlphabet\mathbfcal{OMS}{cmsy}{b}{n}

\newtheorem{theorem}{Theorem}
\newtheorem{assumption}{Assumption}

\begin{document}

\title{Mind the Gaps: Multi-Robot Feedback-Driven Ergodic Coverage in Unknown Environments}
\author{Thales C. Silva \& Nora Ayanian%
\thanks{This work was supported by the
Office of Naval Research (ONR) Award No. N00014-24-1-2662 and the Brown University Seed Award from the Office of the Vice President for Research.
        The authors are with the Department of Computer Science at Brown University, Providence-RI, USA. 
        {\tt\small \{thales\textunderscore silva,nora\textunderscore ayanian\}@brown.edu}}%
}
        
\maketitle

\begin{abstract} 
In this work, we address the problem of multi-robot adaptive coverage, where teams of robots perform dynamic sampling by continuously adjusting their positions to collect data in an environment. This task can be challenging, particularly when robots must be efficiently allocated to new sampling locations over time.
Ergodic search methods optimize robot trajectories by ensuring that the robots’ time-averaged spatial distribution aligns with the spatial distribution of environmental information. While these methods promote effective exploration provided a target distribution, they often fail to account for unknown prior distributions of the environment. To overcome this limitation, we propose an adaptive coverage strategy that utilizes real-time feedback from an environmental model to adjust robot sampling behavior in response to unknown conditions.
Our approach enhances traditional ergodic trajectory optimization by constructing a target spatial information distribution based on parametric models of the environment, which are updated online. This strategy assumes that the environment is either static or changes slowly compared to the robot's motion. Our framework allows robots to dynamically prioritize regions of high interest, improving coverage efficiency, synthesizing effective control policies for individual agents, and optimizing resource use in settings with unknown prior distributions.
We validate our approach through simulations, demonstrating its effectiveness in enhancing coverage and resource allocation.
\end{abstract}


\section{Introduction}
Multi-agent systems involve multiple agents interacting, coordinating, and competing to perform complex tasks~\cite{Li2019}. These systems have practical application in various fields such as robotic swarms~\cite{Yu2021}, environmental monitoring ~\cite{Lynch2008,sung2023decision}, and surveillance~\cite{Yu2020}. In tasks that include data-acquisition, such as search and rescue \cite{Heintzman2021}, characterization of ocean currents \cite{ehrich2016cooperative}, multi-agent exploration \cite{Alysson2024},
the proper coverage planning poses a particular set of 
challenges, including {building a model of the dynamic environment, quantifying the model uncertainty, and assigning the robots to new sampling points according to a uncertainty quantification method.} 
This allows the multi-robot team to adequately consider the spatial distribution of environmental information while optimizing trajectories \cite{Dressel2019,dong2023time,garg2018persistent}. By addressing these issues, robots can persistently monitor a spatiotemporal
environment. In this paper, we
consider these problems and propose a solution for the characterization and coverage of a spatial distribution of interest.

In general, adaptive sampling strategies can be segmented in three distinct parts, each with its own degree of complexity: i) a model of the process to be sampled, ii) an uncertain quantification metric, and iii) a planning approach that will lead the agents to desired sampling locations.
In this work, we leverage the planning flexibility of ergodic search methods \cite{Miller2016,naveed2024eclares,dong2023time,Mathew2011}, which allow us to tackle the issue of optimizing search trajectories while properly considering spatial distributions.  {In particular, we do not focus on modifying the underlying traditional ergodic search algorithm \cite{mathew2010uniform,Dressel2019}. Instead, we focus on systematically constructing the target distribution using estimates of the environmental process and the robot’s sensing model.}

\begin{figure}
\centering
     \includegraphics[width =0.8\columnwidth]{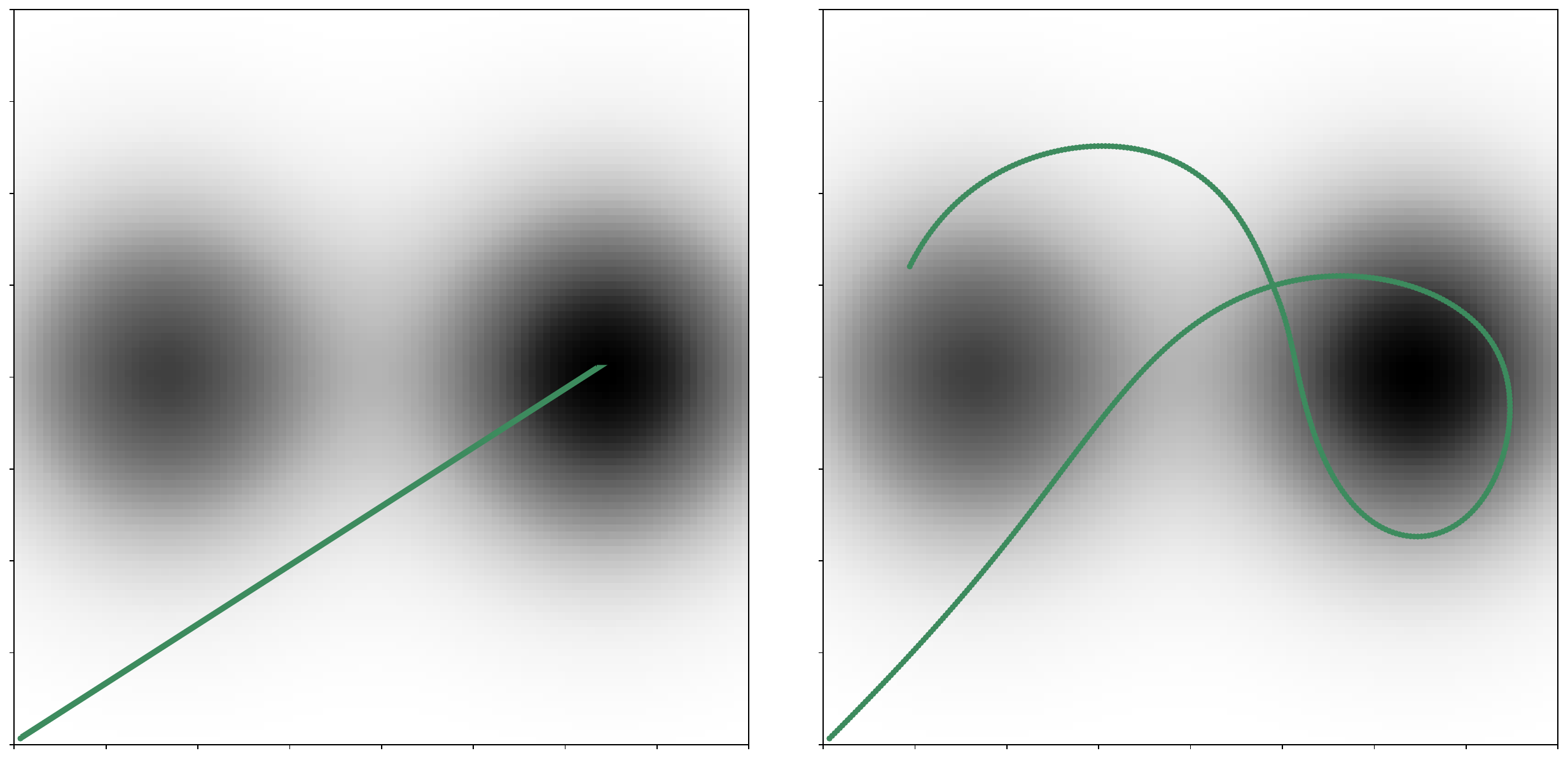}
    \caption{\small
    Example of ergodic trajectory (right) and a trajectory that moves to the highest density point (left). Both trajectories start from identical initial conditions.}
    \label{fig:search_path}
\end{figure}

An ergodic trajectory finds a balance between exploitation and exploration. It spends more time sampling areas that provide valuable information while still covering a wider range instead of being limited to the highest-density points \cite{Miller2016,Dressel2019}. Figure \ref{fig:search_path} illustrates the trajectory of such a coverage strategy where the density given by a bimodal Gaussian distribution. This strategy increases resiliency to potential modeling inaccuracies by encouraging a thorough and adaptable sensing approach.
There are many methods to compute an ergodic trajectory. In \cite{Sylvain2023,Mezic2017}, a heat equation is used to consider the difference between the goal and coverage distributions of a multi-robot system, guiding the agents' movement through a potential field. While Lee \textit{et al.} \cite{lee2024stein} proposes a Stein variational inference method to handle distribution of trajectories in continuous time and space. However, projection-based trajectory optimisation \cite{Dressel2019} is, traditionally, the most widespread method.
  
{While these works focus on \textit{trajectory planning}, the representation of the distribution of the environment is key to closing the loop on adaptive control strategies.
For example, in \cite{Patel2021,Dressel2019,Mathew2011} the desired distribution is pre-defined by the user.
Expanding on that, \cite{Miller2016} uses a Bayesian filter to represent the belief state of the desired distribution and updates it over time. 
In Naveed \textit{et al.} \cite{naveed2024eclares}, information decay is combined with a traditional ergodic search algorithm to model the lack of measurements over time. In addition, they also incorporate an energy-aware filter to ensure the robots have enough battery during the mission.}
In contrast, our approach addresses this challenge by employing a parametric adaptive model to represent the informational distribution in the environment. The robot team collects data samples in dynamic settings, where the ability to track a changing sensory function is enhanced by using a forgetting factor in the data weighting process. 

By incorporating adaptive modeling strategies into a ergodic search, we gain the ability to systematically address a core challenge of adaptive coverage. We capture the underlying distribution and changes in information density. This enables robots to continuously update their understanding of the environment process in real-time. 
This planning approach ensures that the robots do not ignore regions with lower density, while allocating visitation time proportionally to the target distribution. By not modifying the core ergodic algorithm but instead adapting the model used as target distribution, we balance efficient exploration with targeted sampling in areas of high interest.
In a nutshell, the contributions of this work are: 
\begin{itemize}
    \item [I)] An online adaptive method to update the target distribution of an ergodic search algorithm for a multi-robot team, which allows for coverage of unknown distributions without priors.
    \item [II)] The seamless incorporation of target models with an ergodic method for multi-robot system. We show that, for a static environment, our adaptive model converges to the process of interest.
\end{itemize}

The remainder of the paper is organized as follows: the robot team model and problem formulation are described in Section \ref{sec:problem}. Our feedback control law is introduced in Section \ref{sec:feedback_control}. 
Results are in Section \ref{sec:results}, and a discussion is in Section \ref{sec:discussion}. 
Finally, conclusions are in Section \ref{sec:conclusion}. 

\section{Problem Set-Up}
\label{sec:problem}
In this section we formulate the coverage problem. We assume the robots have integrator dynamics,
\begin{align}
    \dot x_i = u_i,
    \label{eq:first_order_dyn}
\end{align}
for $i=1,...,N$, where $x_i\in \mathbb{R}^2$ and $u_i\in\mathbb{R}^2$ are the robot's position and input, respectively. In general, to achieve such a dynamics we can assume that a low-level controller can be in place to enforce (\ref{eq:first_order_dyn}).
We assume the robots carry sensors to sample a dynamical process of interest in the environment and that they have sufficient computational power to keep a local representation of the environment. We consider the existence of a function that maps environment locations to the information of interest, \textit{i.e.,} $\phi:A \mapsto \mathbb{R}$, where $A\subset\mathbb{R}^2$ is a region in the environment. Such a function acts as a measure of importance across $A\subset \mathbb{R}^2$, guiding robot paths. We aim to plan paths in which robots spends more time where $\phi$ is high and less time where it is low. While the function $\phi$ is not directly known to the robots, their sensors can measure its value at their respective locations $x_i$.
The definition of such a function is quite general and fits different applications. For example, in a scenario where a team of robots is deployed to monitor a oil spill, the sensory function could represent the oil density in the environment. In a pollution tracking application using air quality sensors, $\phi$ might be defined as the concentration of harmful particulates at different locations.
{We consider a multi-robot system performing persistent coverage of a environment with a spatiotemporal process. Specifically, we study the following problem.}

{\textbf{Problem 1:} 
Given a team of robots equipped with sensors, we seek to optimize their collective coverage of an environment $A\subset \mathbb{R}^2$ containing a process of interest $\phi$. The environment is characterized by a spatial importance function unknown to the robots but can be estimated through local sensor readings at the robot’s positions.
Design trajectories that ensure persistent sampling of the environment, maintaining updated estimates of $\phi$ based on the robots’ most recent measurements, and maximize their time in regions with high values of $\phi$ while not neglecting regions with low values.
}

The solution of Problem 1 must account for the inherent lack of knowledge of the distribution of the regions of interest in the environment, aiming to maximize the system's overall sampling efficiency.

\subsection{Adaptive Ergodic Search}
In this section, we state the definitions of ergodic search that are useful to this work.
{An ergodic system is one where the time spent in a particular state matches the probability of finding it in that state at any random moment \cite{Cornishbowden2020}.}
We want a strategy that leads the agents to spend a proportional amount of time on regions in which the distribution of the underlying function of interest is higher, without neglecting regions where there is less information to be gained. To attain such a behavior we leverage the strategy presented in \cite{Mathew2011,Dressel2019}. Instead of modifying the algorithm from \cite{Mathew2011,Dressel2019}, this paper centers on formulating a strategy for systematically constructing a local representation of the distribution of interest using the environment and sensing models. 

Let the target distribution be the estimate of the underlying distribution of interest, $\phi$, with evolution described by
\begin{align}
    z(t_f)=T(z(t_0),t_0,t_f),
    \label{eq:target_dist}
\end{align}
where $z(t) \in A \subset \mathbb{R}^2$, in which $A$ is a bounded region, $T$ is the corresponding map that describes the evolution of the target position, and $t_0$ and $t_f$ are the initial and final time, respectively. Let the counterimage of a set $U\subseteq A$ under the transformation $T(\cdot, t_0, t_f)$ be defined as
\begin{align}
    T^{-1}(\cdot, t_0, t_f)(U) = \{x: T(x, t_0, t_f)\in U\},
\end{align}
which allow us to define the evolution of the density function of the target distribution $z(t_f)$. Given the initial uncertainty of the target distribution $\mu(0,x)=\mu_0(x)$, then its evolution can be described by (see \cite{lasota2013chaos} for a detailed discussion),
\begin{align}
    \mu(t,U) = \int_{T^{-1}(\cdot, t_0, t_f)(U)}\mu(0,x)dx,
\end{align}
where $\mu(t,U)$ denotes the probability measure of the set $U$. We employ the Frobenius-Perron operator $P$ corresponding to the transformation $T(\cdot, t_0, t_f)$ to denote the relationship between  $\mu(t,U)$ and $\mu(0,U)$, such that
\begin{align}
    \int_{U}P^{[t_0,t]}\mu(0,x)dx = \int_{U}\mu(t,x)dx.
\end{align}

Inspired by  \cite{mathew2010uniform}, consider the spherical set $B(x,r)$ with radius $r$ and center at $x$, and define the following tube set 
\begin{align*}
    H^t(B(x,r))=\{(y,\tau):\tau \in [0,t] \text{ and } T(y, \tau, t_f){\in} B(x,r)\}.
\end{align*}
In words, this tube set is the set of all points of the target trajectory inside spherical set $B(x,r)$, when evolved according to the target dynamics \eqref{eq:target_dist}. 
Given the sensor dynamics in (\ref{eq:first_order_dyn}), the 
fraction of the total time $t$
that the sensor trajectory spends inside the tube set $H^t(B(x,r))$ is 
\begin{align}
    d^t(x,r) = \frac{1}{Nt}\sum_{i=1}^N \int_{t_0}^t \boldsymbol{1}_{B(x,r)}(T(x_i(\tau), t_0, \tau))d\tau,
    \label{eq:avg_time_tube}
\end{align}
where $\boldsymbol{1}_{B(x,r)}(\cdot)$ is the indicator function of the set $B(x,r)$,
\textcolor{black}{\textit{i.e.,} it equals $1$ when the trajectory is inside 
$B(x,r)$ and $0$ otherwise}. As pointed out in \cite{mathew2010uniform}, we can compute (\ref{eq:avg_time_tube}) through the following spherical integrals,
\begin{align}
    d^t(x,r) = \int_{B(x,r)} C^t(y) dy,
\end{align}
with
\begin{align}
    C^t(y) = \frac{1}{Nt} \sum_{i=1}^N \int_0^t P^{[\tau,t]}\delta_{x_i(\tau)}(y)d\tau,
    \label{eq:coverage_distribution}
\end{align}
where $\delta_{a}$ is the delta distribution with mass at the point $a$. 
Equations \eqref{eq:avg_time_tube} (\ref{eq:coverage_distribution}) are dimensionless and gives the proportion of time the agents spend inside the tube set $H^t(B(x,r))$.

For certain applications (\textit{e.g.}, uniform sampling), it is important that the proportion of time the sensor trajectories spend within the tube set remains close to distribution of the target dynamics. This can be captured by the metric
\begin{align}
    \label{eq:target_density}
    E^2(t) = \int_0^R\int_A\big(C^t(B(x,r))-\mu(t,B(x,r))\big)^2dxdr, 
\end{align}
 which compares the integrals of the distributions, where $R\in\mathbb{R}$ is largest radius in which the sphere regions are considered, \textit{i.e.,} the largest neighborhoods used to compare distributions, and $A\subset \mathbb{R}^2$ is the bounded environment.

It was shown in \cite{Mathew2011,Dressel2019}, that (\ref{eq:target_density}) is equivalent to a metric induced by a Sobolev space of norm negative,
\begin{align}
    \varphi^2(t)=\lVert C^t-\mu(t,\cdot)\lVert ^2 _{H^{-(s+1)/2}}=\sum_{k\in\mathbb{Z}^n}\Lambda_k\lvert s_k(t) \lvert^2,
    \label{eq:sobolev_metric}
\end{align}
where $\lVert \cdot \lVert_{H^{-(s+1)/2}}$ represents the Sobolev space with negative index, $s_k(t)=c_k(t)-\mu_k(t)$, $\Lambda_k=\frac{1}{(1+\lVert k \lVert ^2)^\ell}$,  $\ell=(n+1)/2$, $c_k(t)=\langle C^t, f_k \rangle$, and $\mu_k(t)=\langle\mu(t,\cdot),f_k\rangle$ where $\{f_k\}$ are the Fourier basis functions that satisfy the Neumann boundary conditions and $k$ is the corresponding wave-number vector.

Intuitively, minimizing \eqref{eq:sobolev_metric} (and consequently \eqref{eq:target_density}) ensures that the fraction of time the robot trajectories spend within 
$B(x,r)$ at multiple spatial scales (considered through $R$) matches the target distribution, \textit{i.e.,} drives the system to ergodicity. Consequently, robots spend more time in high-density regions while still visiting lower-density areas.
\subsection{Environment Representation}

We assume that the function
that maps environment locations to the information of interest, $\phi(x)$, varies slowly compared with the dynamics of the robots, which is in accordance with the principle of timescale separation\footnote{From the perspective of the slower dynamics, the fast subsystem can often be approximated as being in a quasi-steady state.}  \cite{khalil2002nonlinear}. 

We use a basis function approximation scheme for each robot to estimate the function $\phi(x)$. In particular, the sensor function approximation for the $i$th robot is given by
\begin{align}
    \label{eq:env_functionapp}
    \hat \phi_i(x)=\mathcal{K}(x)^T\hat a_i,
\end{align}
in which $\hat a_i$ is a parameter vector that must be tuned online, and $\mathcal{K}:\mathbb{R}^2\mapsto \mathbb{R}^m$ is a vector of bounded continuous basis functions. Regarding this representation, we require that the following assumption holds,

\begin{assumption}
    \label{assumption_parameters}
    There is an ideal parameter vector $a\in\mathbb{R}^m$ unknown to the robots, such that 
    \begin{align}
        \phi(x)=\mathcal{K}(x)^Ta.
        \label{eq:env_function}
    \end{align}
\end{assumption}
    
In theory, over a bounded domain, a function can be approximated arbitrarily well by a set of basis functions \cite{Slotine1991}. However, designing a suitable set of basis functions requires application-specific expertise. We can approximate functions using various basis functions such as polynomial, sigmoids, and piecewise-linear functions. In our case, we use Gaussian basis functions due to their smoothness, and localized influence induced by exponential decay. 

Hence, we let the density of the target distribution be defined by
\begin{align}
    \label{eq:target_distribution_def}
    \mu(x)=\frac{\hat{\phi}(x)}{\int_A \hat{\phi}(x)dx}.
\end{align}
Consequently, \eqref{eq:target_distribution_def} defines the target density using the current estimate of the information map, $\hat{\phi}(x)$. We next propose an adaptive update law which, assuming sufficiently rich trajectories, converges to the true information function. To promote such richness, a feedback controller is chosen to minimize the ergodic metric in \eqref{eq:target_density}. The resulting framework operates in a closed loop: measurements update the estimate $\hat{\phi}$, the target distribution $\mu(x)$ is recomputed, and the ergodic controller generates trajectories that drive further exploration and improve the estimate. Our framework is illustrated in Figure \ref{fig:pipeline}.

\begin{figure*}[h]
     \centering
     \includegraphics[width =\textwidth]{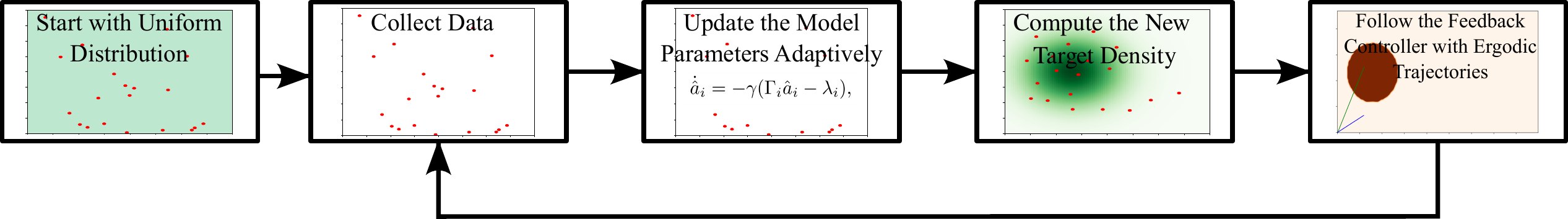}
     \caption{
     \small
     We present the environment model as feedback for the ergodic search algorithm used in dynamic environments. We adaptively update the model parameters according to equations (\ref{eq:adaptive_Gamma}) and (\ref{eq:adaptve_lambda}). The robots gather data iteratively based on new distributions, update the model, and calculate new target densities.
     }
     \label{fig:pipeline}
 \end{figure*}

 \section{Feedback Adaptive Control Law}
 \label{sec:feedback_control}

 We want a strategy to drive the multi-robot team to configurations in which the coverage maximizes the collection of meaningful data while not completely disregarding regions with lower probability density.
 We emphasize that such a task is non-trivial since the robots do not have knowledge of the density distributions a priori. To overcome this challenge, we propose a strategy that models the density distribution online, while we use ergodic algorithm to explore and cover regions of interest.

 Based on \eqref{eq:sobolev_metric}, let us define the following function,
\vspace{-0.15cm}
 \begin{align}
    \label{eq:cost_function}
     \Phi(t) = \frac{1}{2}N^2t^2\varphi^2(t) = \frac{1}{2}\sum_k \Lambda_k\lvert S_k(t) \lvert^2,
 \end{align}
 with $S_k(t)=Nts_k(t)$. The cost function is defined as the first time-derivative of (\ref{eq:cost_function}) at the end of the horizon $t+\Delta t$,
 \begin{align*}
     C(t,\Delta t) = \dot \Phi(t+\Delta t) = \sum_k \Lambda_k S_k(t+\Delta t)\dot S_k(t+\Delta t).
 \end{align*}
 Then, the feedback law for each agent is computed by taking the limit of $\Delta t$ to zero. (Please, see \cite{mathew2010uniform,Mathew2011,Dressel2019} for details on this derivation):
\begin{align}
    \label{eq:feedback_law}
    & u_i(t) = -u_{\text{max}}\frac{B_i}{\lVert B_i \lVert_2},
\end{align}
where
$
    B_i = \sum_k \Lambda_k S_k(t) \nabla f_k(x_i(t)).
$
Using (\ref{eq:feedback_law}), we establish a feedback control law 
 designed to minimize the spherical integrals of the difference of robot trajectory distributions and the estimate of the environmental process density \eqref{eq:target_density}. 
 
 The model parameters $\hat a_i$ are adjusted according to an adaptation law inspired by \cite{Slotine1991}. To proceed, we first consider the following quantities,
\begin{align}
\label{eq:adaptive_Gamma}
   & \Gamma_i(t) = \int_0^t w(\tau)\mathcal{K}_i(\tau)\mathcal{K}_i(\tau)^Td\tau \text{ and}\\
\label{eq:adaptve_lambda}
   & \lambda_i(t) = \int_0^t w(\tau)\mathcal{K}_i(\tau)\phi_i(\tau)d\tau, 
\end{align}
which can be computed differentially by robot $i$ with zero initial conditions. We use the shorthand notation $\mathcal{K}_i(t)= \mathcal{K}(x_i(t))$ for the value of the basis function at the position of the robot $i$.  As pointed out in \cite{Slotine1991}, the function $w(t)\geq 0$ defines the data collection weighting, which can be used to emphasize the importance of recent data over old data. This approach improves the coverage of dynamical processes. Finally, the adaption law for $\hat a_i$ is now defined as
\begin{align}
    \label{eq:parameter_adaptation}
    \dot{\hat{a}}_i = -\alpha(\Gamma_i \hat{a}_i-\lambda_i),
\end{align}
where $\alpha \in \mathbb{R}_{>0}$ is a positive constant that can be seen as learning rate. Such an adaption law is effectively a gradient descent on the error of the sensory information over time. We highlight that, due to the structure of the cost function in (\ref{eq:cost_function}) this strategy can be implemented in a centralized or all-to-all communication network. Extension for a decentralized implementation of this approach is left for future research, {and further discussion is given in the Conclusion section.}

 The behavior of our estimation for adaptive coverage is formalized in the following theorem.

 \begin{theorem}
 \label{theorem:1}
  Let the multi-robot system with first-order dynamics (\ref{eq:first_order_dyn}) in closed-loop with the feedback control law (\ref{eq:feedback_law}). Under the adaptation law in (\ref{eq:parameter_adaptation}) and {Assumption \ref{assumption_parameters}, the robots' environmental model converges to an approximation of the sensory function over the set $\Omega_i=\{x_i(\tau):\tau \geq 0, w(\tau) > 0\}$, composed by the points on the robots' trajectory with positive weights.}
 \end{theorem}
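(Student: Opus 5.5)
Since $\phi_i(\tau)=\phi(x_i(\tau))=\mathcal{K}_i(\tau)^Ta$ by Assumption~\ref{assumption_parameters}, the right-hand side of \eqref{eq:adaptve_lambda} is linear in $a$: $\lambda_i(t)=\Gamma_i(t)a$. Writing the parameter error $\tilde a_i=\hat a_i-a$, the adaptation law \eqref{eq:parameter_adaptation} becomes
\begin{align*}
\dot{\tilde a}_i=-\alpha\Gamma_i(t)\tilde a_i ,
\end{align*}
a linear time-varying system driven only by the data matrix $\Gamma_i(t)$. Here $\Gamma_i(t)$ is symmetric positive semidefinite and nondecreasing in the Loewner order, since $\dot\Gamma_i=w\mathcal{K}_i\mathcal{K}_i^T\succeq 0$. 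The closed-loop controller \eqref{eq:feedback_law} enters only through the trajectory $x_i(\cdot)$, which is well defined with bounded speed $u_{\max}$. The proof therefore does not need ergodicity; it treats $x_i(\cdot)$ as any trajectory generated by the closed loop.

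\textbf{Step 1 (Lyapunov decrease).} Take $V_i=\tfrac12\tilde a_i^T\tilde a_i$. Then
\begin{align*}
\dot V_i=-\alpha\tilde a_i^T\Gamma_i(t)\tilde a_i\le 0 .
\end{align*}
So $\tilde a_i$ is bounded, $V_i$ converges to a limit, and
\begin{align*}
\int_0^\infty \tilde a_i^T\Gamma_i\tilde a_i\,dt<\infty .
\end{align*}

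\textbf{Step 2 (convergence of the prediction error).} Fix any $t_1$. Monotonicity of $\Gamma_i$ gives, for $t\ge t_1$,
\begin{align*}
\tilde a_i(t)^T\Gamma_i(t_1)\tilde a_i(t)\le \tilde a_i(t)^T\Gamma_i(t)\tilde a_i(t).
\end{align*}
The plan is to show that $g(t)=\tilde a_i^T\Gamma_i\tilde a_i$ tends to zero by Barbalat's lemma. The function $g$ is integrable by Step 1. Its derivative is
\begin{align*}
\dot g = w\,(\mathcal{K}_i^T\tilde a_i)^2-2\alpha\,\tilde a_i^T\Gamma_i^2\tilde a_i ,
\end{align*}
which is bounded provided $w$ is bounded and $\Gamma_i$ stays bounded. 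Boundedness of $\Gamma_i$ holds for forgetting-type weights with $\int w<\infty$, or, if $w$ is constant, after normalizing by time. If boundedness fails, I would instead use the monotone argument directly: $\tilde a_i^T\Gamma_i(t_1)\tilde a_i$ is eventually small for each fixed $t_1$.

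In either case, I expect to get
\begin{align*}
\lim_{t\to\infty}\int_0^{t_1}w(\tau)\big(\mathcal{K}_i(\tau)^T\tilde a_i(t)\big)^2d\tau=0
\end{align*}
for every $t_1$. Since $\tilde a_i(t)$ is bounded and $\dot{\tilde a}_i\to 0$, its limit points also give
\begin{align*}
\hat\phi_i(x_i(\tau))-\phi(x_i(\tau))=\mathcal{K}_i(\tau)^T\tilde a_i\to 0
\end{align*}
whenever $w(\tau)>0$, using continuity of $\mathcal{K}$. This is exactly convergence of $\hat\phi_i$ to $\phi$ on $\Omega_i$.

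\textbf{Step 3 (interpretation).} Convergence of $\hat a_i$ itself to $a$ would need persistent excitation, i.e.\ $\Gamma_i\succ 0$. That is why the statement claims only approximation on $\Omega_i$, and the ergodic controller is what promotes the needed richness.

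The main obstacle is Step 2, i.e.\ passing from integrability of $\tilde a_i^T\Gamma_i\tilde a_i$ to pointwise convergence on $\Omega_i$, including ruling out oscillation of $\tilde a_i$ within the null space of $\Gamma_i$. I would handle this with monotonicity of $\Gamma_i(t)$ and a uniform-continuity (Barbalat) argument. To make it rigorous I would add mild standing conditions: $w$ bounded, and $\Gamma_i$ bounded (or normalized).
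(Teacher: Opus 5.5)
Your Step 1 is the paper's entire proof: the same Lyapunov function (summed over robots), the same identity $\lambda_i=\Gamma_i a$, and the same $\dot V=-\alpha\sum_i\tilde a_i^T\Gamma_i\tilde a_i$. That identity needs $a$ constant, as you take it, so the paper's appeal to a slowly varying $a(t)$ adds nothing. The paper then reads ``$\dot V<0$ when $\Gamma_i\succ 0$'' as convergence on $\Omega_i$. That gives nothing beyond boundedness when $\Gamma_i$ is singular. Your Step 2 is exactly the missing part, and you have located the difficulty correctly.

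As written, however, Step 2 does not prove the theorem under its stated hypotheses. Two-sided Barbalat needs $\dot g$ bounded, and you get that only by adding the hypothesis that $\Gamma_i$ is bounded. This excludes the basic case $w\equiv\text{const}$, where $\Gamma_i$ grows linearly. It also excludes genuine forgetting. Here a sample's weight depends only on its time stamp, so weighting recent data more than old data means $w$ nondecreasing (e.g.\ $w(\tau)=e^{\beta\tau}$, which up to the factor $e^{\beta t}$ is exponential forgetting). A nondecreasing $w\not\equiv 0$ is never integrable, so the ``forgetting-type weights with $\int w<\infty$'' you invoke do not exist in this parametrization. ``Normalizing by time'' changes the adaptation law. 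The fallback is not an argument either: $h(t)=\tilde a_i^T\Gamma_i(t_1)\tilde a_i$ is not monotone, since $\dot h=-2\alpha\,\tilde a_i^T\Gamma_i(t_1)\Gamma_i(t)\tilde a_i$ is indefinite and involves the unbounded $\Gamma_i(t)$. Integrability of $h$ alone does not make it eventually small.

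There are two ways to close the gap.

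(i) Use a one-sided bound. Dropping the nonpositive term gives $\dot g\le w(\mathcal{K}_i^T\tilde a_i)^2\le \bar w\,\sup_x\|\mathcal{K}(x)\|^2\,\|\tilde a_i(0)\|^2$. A nonnegative integrable function whose derivative is bounded above tends to zero, so $g\to 0$ needs $w\le\bar w$ but no bound on $\Gamma_i$.

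(ii) Avoid Barbalat entirely. $\ker\Gamma_i(t)$ is nonincreasing in $t$, so it equals a fixed subspace $N$ for all $t\ge T_0$. For $v\in N$, $v^T\dot{\tilde a}_i=-\alpha(\Gamma_i v)^T\tilde a_i=0$, so the $N$-component of $\tilde a_i$ is frozen and there is no oscillation in the null space. The component $\tilde a_i^\perp$ in $N^\perp$ satisfies
\begin{align*}
\frac{d}{dt}\|\tilde a_i^\perp\|^2=-2\alpha\,(\tilde a_i^\perp)^T\Gamma_i(t)\tilde a_i^\perp\le-2\alpha\sigma\|\tilde a_i^\perp\|^2,\qquad t\ge T_0,
\end{align*}
with $\sigma>0$ the smallest eigenvalue of $\Gamma_i(T_0)$ restricted to $N^\perp$. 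For $v\in N$, $0=v^T\Gamma_i(t)v=\int_0^t w(\mathcal{K}_i^Tv)^2d\tau$, so $\mathcal{K}_i(\tau)\perp N$ whenever $w(\tau)>0$. Hence on $\Omega_i$, $\hat\phi_i-\phi=\mathcal{K}^T\tilde a_i^\perp\to 0$ exponentially and uniformly, for any continuous $w\ge 0$, bounded or not.

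With fix (i), the rest of your Step 2 goes through, with two remarks. First, $\dot{\tilde a}_i\to 0$ is not needed, since compactness of the bounded $\tilde a_i(t)$ suffices for the limit-point argument. Second, passing from ``a.e.\ $\tau$'' to ``every $\tau$ with $w(\tau)>0$'' requires $w$ continuous; your argument and the paper both leave this implicit.
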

 \begin{proof}
    Consider the following Lyapunov candidate function
    \begin{align}
        V = \frac{1}{2} \sum_{i=1}^n\tilde{a}_i^T \tilde{a}_i,
    \end{align}
    \textcolor{black}{where $\tilde{a}_i=\hat{a}_i-a_i$, and $a>0$.} Note that such a function is quadratic in the parameter errors $\tilde{a}_i$, thus $V$ is bounded below by zero and is positive definite. Taking its time derivative along the trajectories of the adaptation law, noticing that under the assumption of timescale separation (\textit{i.e.,} that $a(t)$ varies sufficiently slowly), $\dot{\tilde{a}}_i\approx \dot{\hat{a}}_i$, we have
    \begin{align}
        \dot V = \sum_{i=1}^n \tilde{a}_i^T  \dot{\hat{a}}_i,
    \end{align}
    substituting $\dot{\hat{a}}_i$ with the adaptation law in (\ref{eq:parameter_adaptation}) yields
    \begin{align}
        \dot V = -\alpha \sum_{i=1}^n \tilde{a}_i^T (\Gamma_i \hat{a}_i-\lambda_i).
    \end{align}
    Now, considering the quantities in (\ref{eq:adaptive_Gamma}) and (\ref{eq:adaptve_lambda}), and recalling Assumption 1,
    \begin{IEEEeqnarray}{rlr}
        \dot V =& -\alpha \sum_{i=1}^n \tilde{a}_i^T 
        \Big(\int_0^t w(\tau)\mathcal{K}_i(\tau)\mathcal{K}_i(\tau)^T\hat{a}_id\tau 
        \nonumber
        \\
        &-\int_0^t w(\tau)\mathcal{K}_i(\tau)\phi_i(\tau)d\tau\Big)
        \nonumber \\
        = & -\alpha \sum_{i=1}^n \tilde{a}_i^T 
        \int_0^t w(\tau)\mathcal{K}_i(\tau)\mathcal{K}_i(\tau)^T\tilde{a}_id\tau,
\end{IEEEeqnarray}
which is negative definite for $\int_0^t w(\tau)\mathcal{K}_i(\tau)\mathcal{K}_i(\tau)^Td\tau>0$. Therefore, we have that $V>0$ and $\dot V<0$, for $\Omega_i=\{x_i(\tau):\tau \geq 0, w(\tau) > 0\}$. Consequently, for sufficiently slowly varying $a(t)$, the robots model asymptotically converges to an approximation of the sensory function.
 \end{proof}
 Notice that Theorem \ref{theorem:1} gives asymptotic convergence of the robot's local model to the environmental sensory function, $\phi_i$. If $\lim_{t\rightarrow\infty} \Gamma_i(t)>0$, the strategy will be sufficiently rich, allowing the robots' model to ultimately converge to the desired model \cite{Schwager2008}. In our approach, the updated model of the target distribution is used in the ergodic cost metric \eqref{eq:target_density}, which enhances its expressiveness and seamlessly integrate with the adaptive strategy.

 If the Assumption 1 does not hold--such as when parameters shift too rapidly for the controller to track accurately--one might question the strategy's effectiveness. This is a practical concern, as real-world parameters often change dynamically. 
 While a comprehensive robustness analysis is object of a future paper, we notice that similar analysis for adaptive controllers can be found in the literature, including \cite{Slotine1991} and several foundational texts on adaptive control (\textit{e.g.}, \cite{Kaiwen2021,WANG2017,WeiLin2002,TSINIAS2000}). Numerical results indicate that the adaptation law adjusts the parameters to achieve small tracking error of the desired model. Performance of tracking is measured by the integral of the squared error, as outlined in Section \ref{sec:results}.

\section{Results}
\label{sec:results}
In this section, we present simulations of our proposed approach and compare with a baseline where there is no adaptive update of the target distribution. In the simulations, $4$ agents are considered with the integrator model in (\ref{eq:first_order_dyn}) with a maximum speed of 1.0 $m/s$ and sampling frequency of 2.0 $Hz$. Our simulation code is available at 
\url{https://github.com/scthales/mind-the-gaps-ergodic}.

\subsection{Static Gaussians}
\label{ex:bimodal_1}
In the first example, we consider a target distribution given by a bimodal static Gaussian distribution. In Figure~\ref{fig:trajectories} we show the trajectories of our approach in (c) and of an ergodic search with a fixed uniform target distribution in (b). The underlying model of interest is shown in Figure \ref{fig:static_gaussian} (a). Figure \ref{fig:static_gaussian} (d) shows the evolution of the normalized RMSE between the approximated model and the underlying function of interest over the entire environment of interest $A\subset \mathbb{R}^2$.

\begin{figure*}[h]
    \centering
    \begin{subfigure}[t]{0.2\textwidth}
        \centering
        \includegraphics[width=\textwidth]{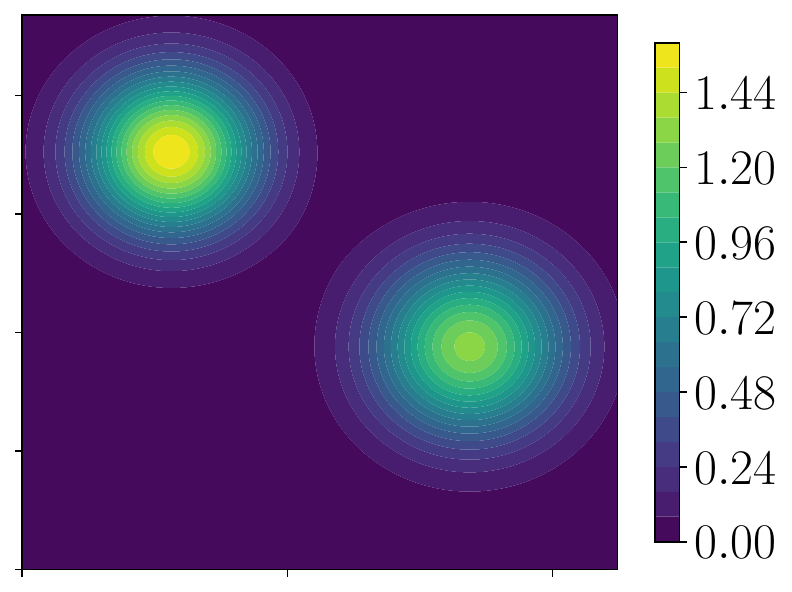}
        \caption{Underlying  distribution. 
        }
    \end{subfigure}
    \begin{subfigure}[t]{0.2\textwidth}
        \centering
        \includegraphics[width=\textwidth]{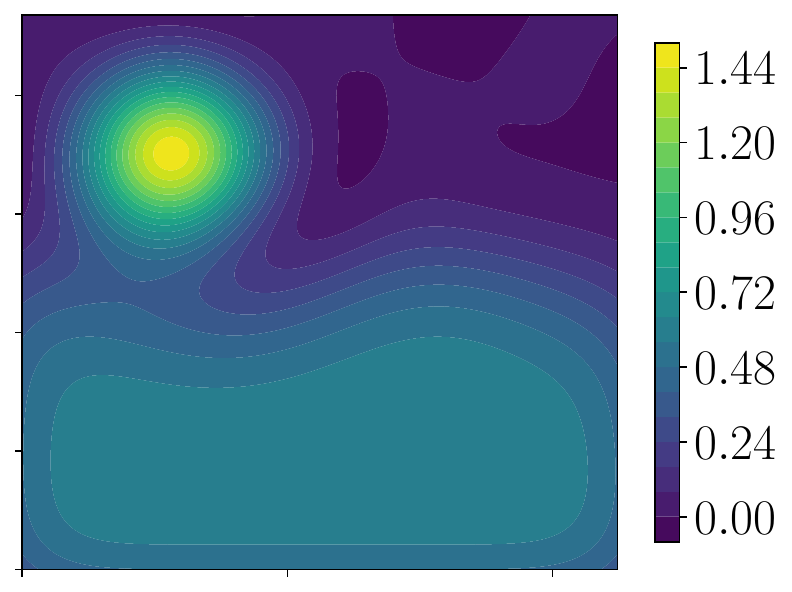}
        \caption{Fixed uniform target\\ distribution $\mu(x)$}
    \end{subfigure} 
    \begin{subfigure}[t]{0.2\textwidth}
        \centering
        \includegraphics[width=\textwidth]{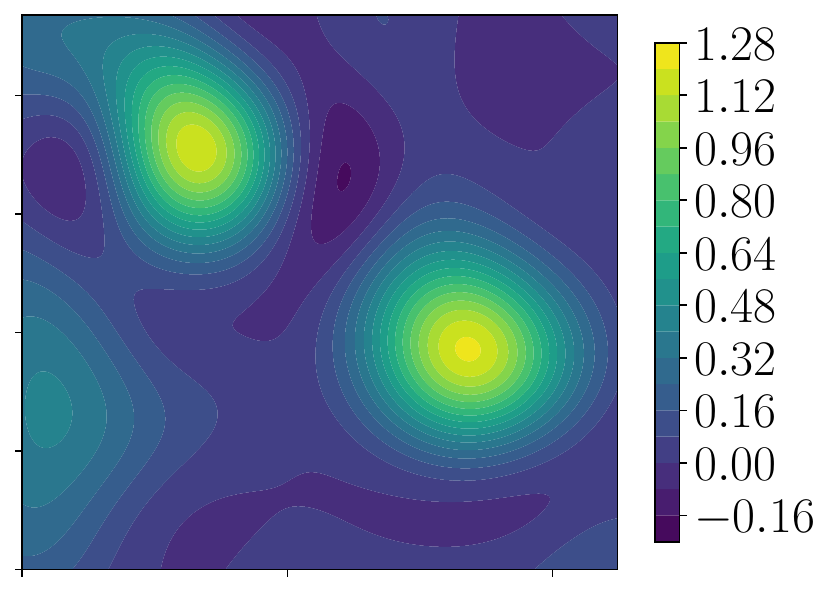}
        \caption{Adaptive target distribution $\mu(x)$ (our approach) 
        }
    \end{subfigure}
    \vspace{-0.15cm}
    \begin{subfigure}[t]{0.2\textwidth}
        \centering
        \includegraphics[width=\textwidth]{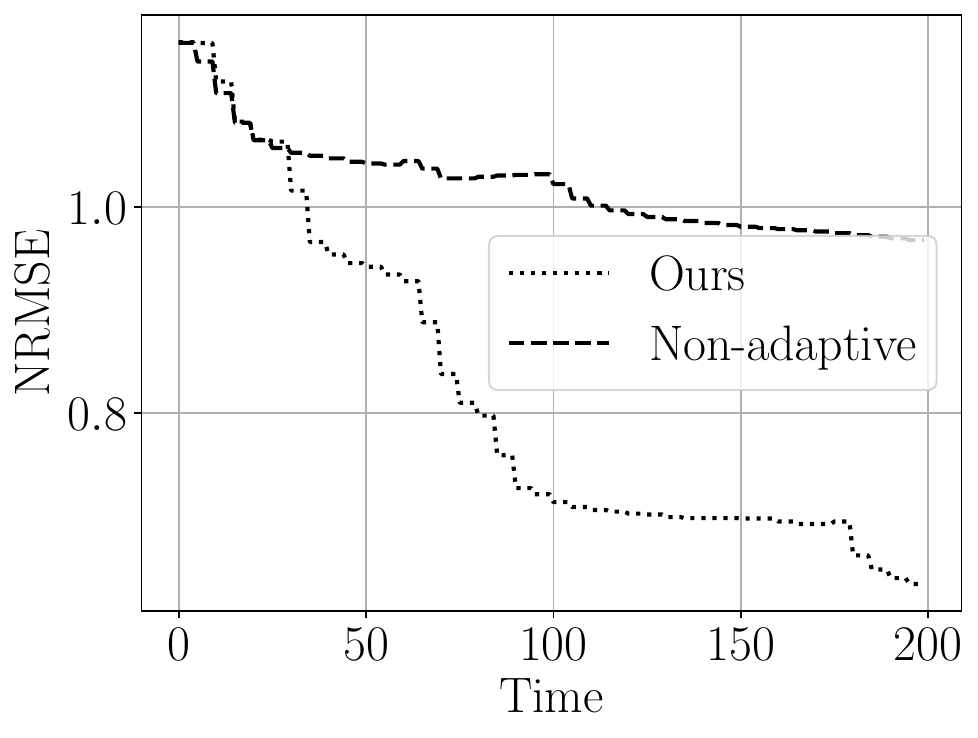}
        \caption{RMSE 
        }
    \end{subfigure}
    \caption{\small
    In (a) is the underlying distribution, given by two Gaussians. In (b) and (c) is shown
    the agents model using the local function approximation, as in \eqref{eq:env_functionapp}, of the environment at the end of the run.
    In (d) is the root mean square error (RMSE) between each of the approaches and the underlying distribution. 
    } 
    \normalsize
    \label{fig:static_gaussian}
\end{figure*}

\begin{figure}[h]
    \centering
    \begin{subfigure}[t]{0.22\textwidth}
        \centering
        \includegraphics[width=\textwidth]{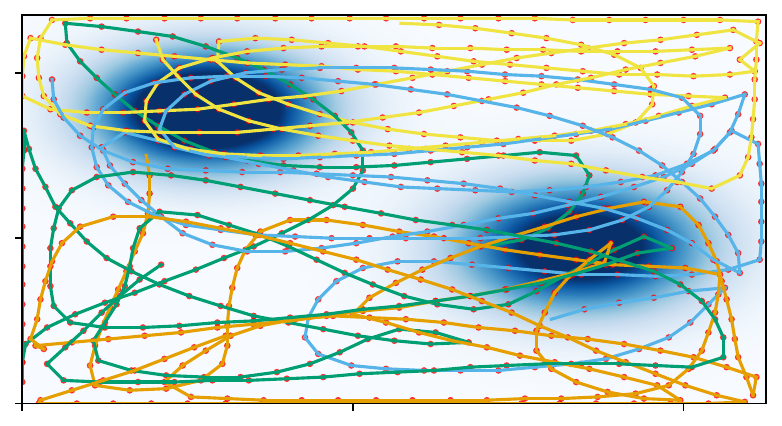}
        \caption{Uniform target\\ distribution 
        }
    \end{subfigure}
    \begin{subfigure}[t]{0.22\textwidth}
        \centering
        \includegraphics[width=\textwidth]{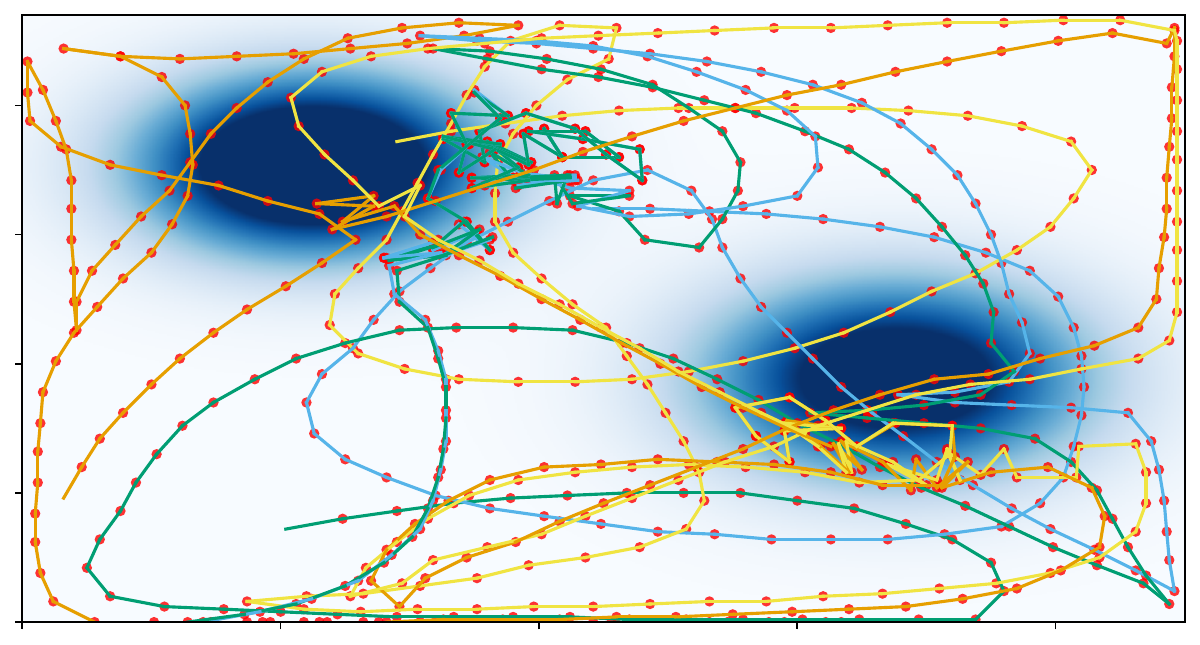}
        \caption{Adaptive target\\ distribution}
    \end{subfigure} 
    \caption{\small
    Trajectories for ergodic search with uniform (a) and with adaptive (b) target distributions. Red dots along the trajectories represent sampling locations.
    } 
    \normalsize
    \label{fig:trajectories}
\end{figure}

\subsection{Time-Varying Gaussians}
In this example, we focus on a time-varying environment. We use the same conditions as in Section IV-A, but we let the Gaussians move in opposite directions. To set their speed, we use a symmetric s-curve parametrized by $v_{g} = s_{sim}^\gamma / (s_{sim}^\gamma + (1-s_{sim})^\gamma)$, where $s_{sim}$ is the normalized simulation step and $\gamma\in(0,1)$. This gives a profile for the the speed of the target, with higher speed at the beginning and end, and lower in the middle. For values of $\gamma$ closer to 1 the speed is higher while for lower values the maximum speed is lower.
The results are shown in Figure \ref{fig:moving_rmse} for $\gamma=0.2$ and $\gamma=0.3$, where we show the RMSE for different time steps and for two different speeds of the target distribution. 


\begin{figure}[h]
    \centering
    \begin{subfigure}[t]{0.22\textwidth}
        \centering
        \includegraphics[width=\textwidth]{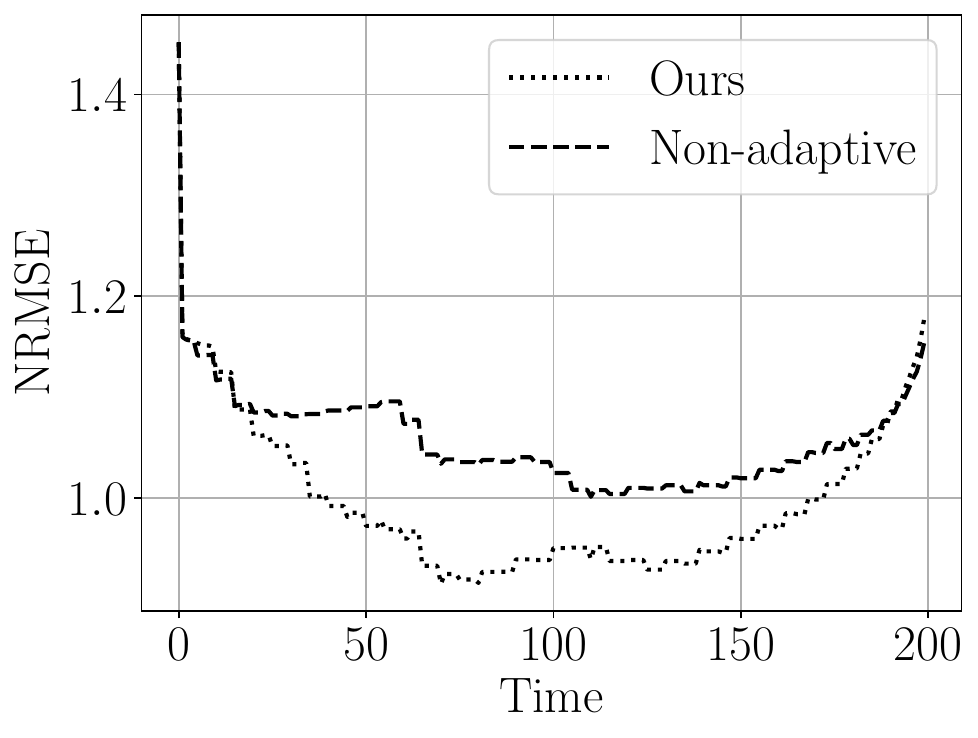}
        \caption{Moving target\\ $\gamma=0.2$ 
        }
    \end{subfigure}
    \begin{subfigure}[t]{0.22\textwidth}
        \centering
        \includegraphics[width=\textwidth]{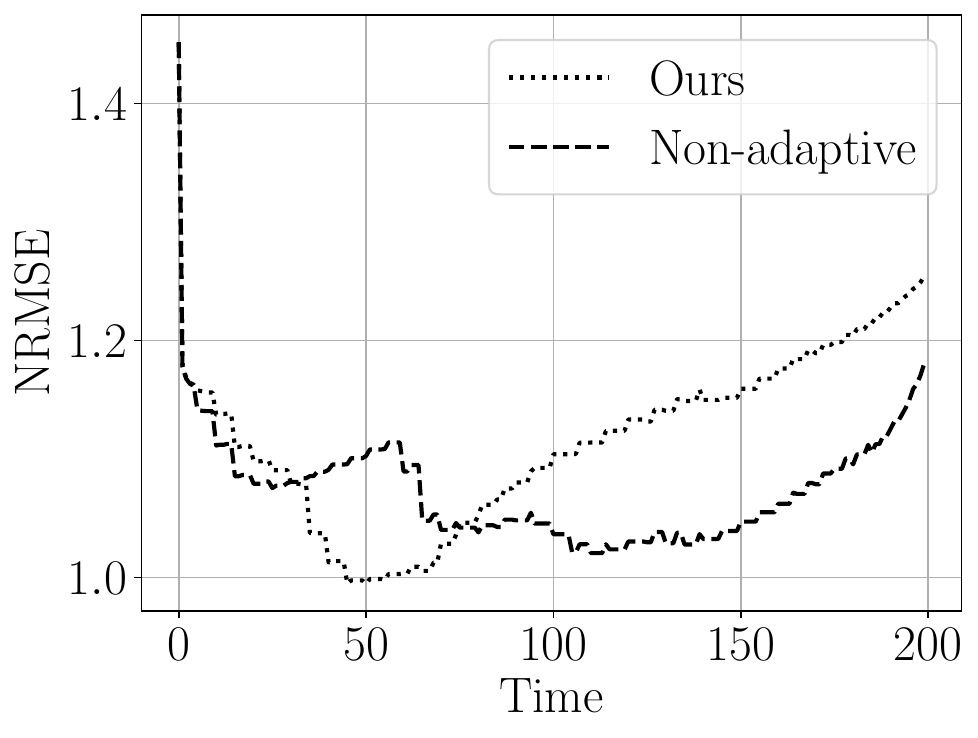}
        \caption{Moving target\\ $\gamma = 0.3$}
    \end{subfigure} 
    \caption{\small
     \textcolor{black}{Root mean square for moving targets with different speeds. Note that our method works well for lower speeds. While for higher speeds, our method starts to break-down and uniformly sampling the environment leads to better model approximations.}
    } 
    \normalsize
    \label{fig:moving_rmse}
\end{figure}


\section{Discussion}
\label{sec:discussion}

The timescale separation, which allowed us to treat the target dynamics as quasi-steady while analyzing the long-term modeling, is necessary to show convergence of the parameters to the real function. For a dynamic process that varies sufficiently slow, we noticed that such an assumption is sufficient for dynamic coverage when robots do not have a model of the environmental process. 
However, as we showed in the examples, for a process that evolves with dynamics close to the robots' speed, our approach start to break down. We hypothesize that a similar break down will be unavoidable with any strategy that doesn't have  a prior model and demands time to update the process model. In addition, the
desired properties only hold in the limit as
time approaches infinity. Thus, defining the relationship between quality of the model and number of samples is an interesting point for future research.

\section{Conclusion}
\label{sec:conclusion}
In this paper we investigated the problem of adaptive coverage control for multi-robot systems in scenarios in which the agents do not have prior information about the distribution of desired information. 
As a first result, we proposed an online adaptive method to update the target distribution of an ergodic search algorithm for multi-robot agents. Consequently, in our approach, rather than designing a strategy to assign the robots to new sampling locations, we relied on ergodic search to persistently monitor an environment with no prior information about the distribution of desired information. We use the natural exploration quality of ergodic search algorithms to gather initial data, and as the local model improves, focus the search on areas of higher interest without ignoring areas with lower distributions.
Then we show that our incorporation of the adaptive target model and ergodic search asymptotically converges to the sensory function for relatively slow environments.

Finally, we show through numerical simulations the effectiveness of our approach in a time-varying scenario. In addition, we tested our algorithm with varying values of time-varying distributions and illustrated the limits of our approach. There are many possible lines of interest for future work, we plan to formally study the time-varying conditions that allow an adaptive strategy to converge. While we assumed relatively slowly varying distributions, it can be possible to provide explicit bounds on the time-derivative of the parameters. Another line of interest is to derive a completely distributed strategy for the ergodic search, allowing the robots to adaptively and distributively search a dynamic environment.
\vspace{-0.2cm}
\bibliographystyle{IEEEtran}
\bibliography{model_plus_env}

\end{document}